\def\eqref#1{equation~\ref{#1}}
\def\1{\bm{1}}
\DeclareMathAlphabet{\mathsfit}{\encodingdefault}{\sfdefault}{m}{sl}
\SetMathAlphabet{\mathsfit}{bold}{\encodingdefault}{\sfdefault}{bx}{n}
\newtheorem{theorem}{Theorem}
\newtheorem{lemma}{Lemma}
\newtheorem{definition}{Definition}
\newtheorem{proposition}{Proposition}
\newtheorem{remark}{Remark}
\newtheorem{example}{Example}
\title{A Lower Bound for the Number of Linear Regions of Ternary ReLU Regression Neural Networks}
\author{\name Yuta Nakahara \email y.nakahara@waseda.jp \\
      \addr Center for Data Science\\
      Waseda University
      \AND
      \name Manabu Kobayashi \email mkoba@waseda.jp \\
      \addr Center for Data Science\\
      Waseda University
      \AND
      \name Toshiyasu Matsushima \email toshimat@waseda.jp\\
      \addr Department of Applied Mathematics\\
      Waseda University}
\begin{document}

\maketitle

\begin{abstract}
  With the advancement of deep learning, reducing computational complexity and memory consumption has become a critical challenge, and ternary neural networks (NNs) that restrict parameters to $\{-1, 0, +1\}$ have attracted attention as a promising approach. While ternary NNs demonstrate excellent performance in practical applications such as image recognition and natural language processing, their theoretical understanding remains insufficient. In this paper, we theoretically analyze the expressivity of ternary NNs from the perspective of the number of linear regions. Specifically, we evaluate the number of linear regions of ternary regression NNs with Rectified Linear Unit (ReLU) for activation functions and prove that the number of linear regions increases polynomially with respect to network width and exponentially with respect to depth, similar to standard NNs. Moreover, we show that it suffices to first double the width, then either square the width or double the depth of ternary NNs with alternating ReLU and identity layers to achieve a lower bound on the maximum number of linear regions comparable to that of general ReLU regression NNs. When using ReLU in all the layers, a similar bound is obtained by further doubling the width. This provides a theoretical explanation, in some sense, for the practical success of ternary NNs.
\end{abstract}

\section{Introduction}

In recent years, with the rapid development of deep learning, neural networks (NNs) have achieved remarkable results in various fields. However, their large computational and memory consumption has become a serious barrier to applications in mobile devices and edge computing. Particularly, considering implementation in embedded systems that require real-time processing or in environments with limited computational resources, memory and computation reduction of NNs is an urgent issue.

As a promising approach to this problem, methods for discretizing NN parameters have been proposed. Specifically, methods that restrict network weights to binary $\{-1, +1\}$ or ternary $\{-1, 0, +1\}$ values, or quantize the output values of activation functions have been developed \citep{BinarizedNN_NIPS,TernaryNN_ICASSP}. These methods have achieved performance comparable to conventional continuous-valued NNs in a wide range of tasks including image recognition \citep{XNOR-Net,ReActNet}, natural language processing \citep{BinaryBERT,BitNet_JMLR}, and speech recognition \citep{speech_recognition}, while successfully achieving significant reductions in computational complexity and memory usage. Particularly noteworthy is the surprising fact that these discretized NNs can maintain high performance in practical tasks despite extremely restricting their parameters.

However, the theoretical understanding of why these discretization methods work effectively remains insufficient. The motivation of this study is to provide a theoretical explanation for the success of ternary NNs. When theoretically evaluating the performance of NNs, various perspectives can be considered, e.g, the expressivity, i.e., the complexity of functions representable by NNs, the empirical error for training data, and the generalization error when applying the trained model to new data. In this work, we evaluate the expressivity of ternary NNs, as restricting the parameter space raises concerns about its significant impact on the class of functions that NNs can represent. It should also be noted that, since expressivity is defined as a property of the NN itself independently of data, this study does not consider learning from data.

Various metrics can be considered for evaluating expressivity. For shallow NNs with three layers, \citet{Barron} showed that, under certain conditions, any function can be universally approximated. On this basis, it was long believed that a depth of three layers is sufficient for NNs. Subsequently, as the superior performance of deep NNs was empirically demonstrated, theoretical researchers became interested in the advantages of increasing network depth. One of the early studies \citep{montufar} on this topic evaluated the number of linear regions representable by NNs. This work showed that the maximum number of linear regions representable by deep NNs with ReLU activations grows polynomially in the width and exponentially in the depth of the network. Following this, various studies have evaluated different quantities related to linear regions \citep{linearregion_pascanu2014_ICLR,linearregion_serra18b,linearregion_hanin19a,linearregion_esaki}. While these studies do not directly assess approximation accuracy of functions, it is intuitively clear that functions with an insufficient number of linear regions cannot approximate complex functions well. For instance, a function with only one linear region can not approximate a smooth curve well. Subsequently, following the approach of Barron, the approximation accuracy of functions within some classes is evaluated. For example, \citet{Yarotsky} demonstrated that increasing the depth of a NN is more efficient than increasing its width for approximating functions in Sobolev spaces. Although \citet{Yarotsky} did not directly used the results by \citet{montufar}, his proof relies on a similar sawtooth (tent map) construction used by \citet{montufar} to count linear regions.

Thus, in the literature on the expressivity of deep NNs, the number of linear regions was historically evaluated first, then, approximation accuracy of functions was evaluated. In light of this background, we evaluate the number of linear regions of ternary NNs in this study. Our results may also provide some insights into evaluating the approximation accuracy of functions by ternary NNs, and it remains an important direction for future work. The main limitations of this study are as follows. While models such as BitNet b1.58 quantize not only the weights but also the outputs of activation functions, this aspect is outside the scope of this study.

The main contribution of this paper is to show that the maximum number of linear regions of ternary NNs also increases polynomially with respect to width and exponentially with respect to depth, similar to conventional NNs. More specifically, we prove that it suffices to first double the width, then either square the width or double the depth of ternary regression NNs with alternating ReLU and identity layers to obtain a lower bound on the maximum number of linear regions comparable to that for general ReLU regression NNs. When using ReLU in all the layers, a similar bound is obtained by further doubling the width. Although from the limited perspective of the number of linear regions of piecewise linear functions represented by ReLU NNs, and from the comparison between lower bounds on the maximum number of linear regions of conventional NNs and ternary NNs, these results provide one theoretical explanation for the practical success of ternary NNs.

The rest of this paper is structured as follows. Section 2 introduces notation for explaining this research and previous studies. Section 3 reviews existing research on the number of linear regions of general NNs. Section 4 states the main theorem regarding the number of linear regions of ternary NNs and provides its proof. Section 5 discusses the significance and limitations of the obtained results and concludes this research.

\section{Preliminaries}

\begin{definition}[Neural networks]
Let an arbitrary natural number $L \in \mathbb{N}$ and natural numbers $n_l \in \mathbb{N}$ be given for integers $l = 0, 1, 2, \dots , L$. Then, we call the following function $\bm F_{\bm \theta}:\mathbb{R}^{n_0} \to \mathbb{R}^{n_L}$, which is expressed as a composition of linear functions\footnote{In this paper, linear functions refer to functions that form some hyperplane and do not necessarily pass through the origin. More precisely, such functions are called affine functions, but following the notation of previous research \citep{montufar}, we call them linear functions in this paper.} $\bm f^{(l)}: \mathbb{R}^{n_{l-1}} \to \mathbb{R}^{n_l}; \bm x \mapsto \bm W^{(l)} \tilde{\bm x}$, where $\tilde{\bm x}$ represents the vector $[x_1, x_2, \dots , x_{n_{l-1}}, 1]^\top$, and nonlinear functions $\bm g^{(l)}: \mathbb{R}^{n_l} \to \mathbb{R}^{n_l}$, a NN of depth $L$ with width $n_l$ at the $l$-th layer:
\begin{align}
    \bm F_{\bm \theta}(\bm x) = \bm f^{(L)} \circ \bm g^{(L-1)} \circ \bm f^{(L-1)} \circ \dots \circ \bm g^{(1)} \circ \bm f^{(1)} (\bm x).
\end{align}
Here, $n_0$ and $n_L$ represent the dimensions of input and output, respectively, and $\bm \theta$ represents all parameters $\{ \bm W^{(l)} \}_{l=1}^L$ of the NN. In this paper, we only deal with regression NNs where the final layer is a linear function. The nonlinear function $\bm g^{(l)}$ is called an activation function. In particular, we call regression NNs that use the following ReLU function for all the activation functions ReLU Regression NNs in this paper.
\begin{align}
    g_j^{(l)} (\bm x) = \max \{ 0, x_j \},
\end{align}
where $g_j^{(l)}(\bm x)$ and $x_j$ represent the $j$-th components of $\bm g^{(l)} (\bm x)$ and $\bm x$, respectively.
Also, we call NNs where all the parameters of the linear function $\bm f^{(l)}$ take only values from $\{1, 0, -1\}$ ternary NNs in this paper.
\end{definition}

Limiting NN weights to ternary values is also performed in BitNet b1.58 \citep{BitNet_JMLR}. While BitNet b1.58 additionally quantizes the output of activation functions, as mentioned in the introduction, this paper does not deal with quantization of activation functions. Evaluating the effect of activation function quantization on NN expressivity is a future research topic.

NNs can be represented as graphs as shown in Fig.~\ref{fig:NN}. Each small black dot represents the constant term 1. In this paper, we do not distinguish the coefficients for input variables and those for constant terms, i.e., biases. Both are represented by $W_{i,j}^{(l)}$ and called \textit{weights}. In other words, if NNs are ternary, biases are also restricted to $\{ 1, 0, -1 \}$ in our setting. If an activation function used at a node is ReLU, we represent it by drawing the bent line inside the node.

\begin{figure}
    \centering
    \includegraphics[width=0.6\linewidth]{./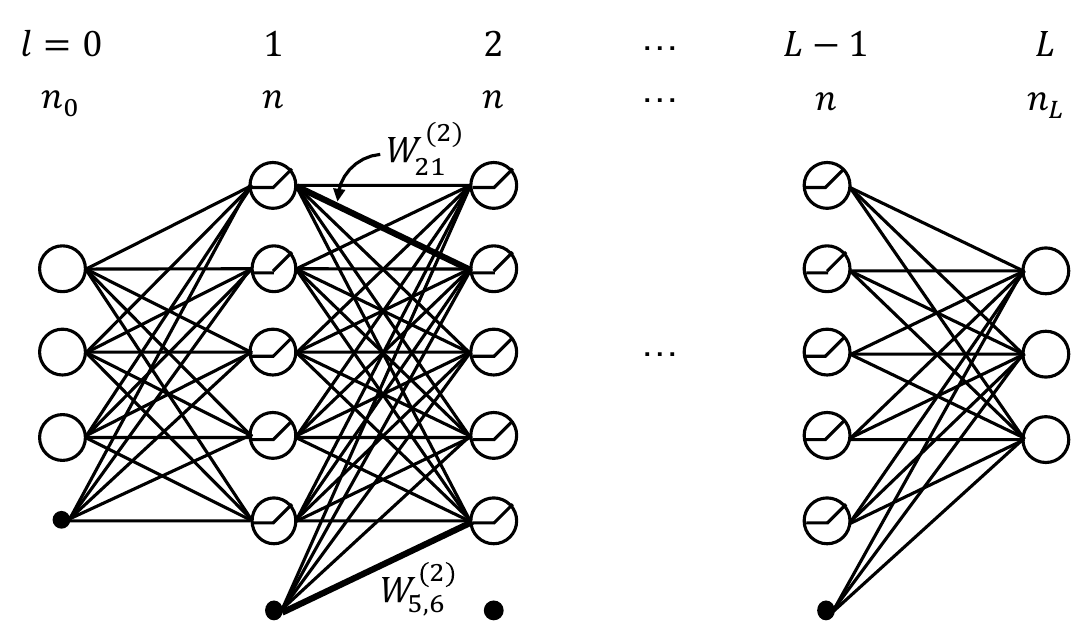}
    \caption{Illustration of NN. The weight of the edge extending from the $j$-th node of the $(l-1)$-th layer to the $i$-th node of the $l$-th layer corresponds to the $(i,j)$ component $W_{ij}^{(l)}$ of the weight matrix $\bm W^{(l)}$ of the linear function $\bm W^{(l)} \tilde{\bm x}$. The bent line inside the node represents that the activation function is ReLU. Each small black dot represents the constant term 1.}
    \label{fig:NN}
\end{figure}

\begin{definition}[Linear regions \citep{montufar}]
For a function $\bm f: D \to \mathbb{R}^n$, we say that $U \subset D$ is a linear region of $\bm f$ if the following holds:
\begin{itemize}
    \item The restriction\footnote{A function that has the same input-output mapping as $\bm f$ but is defined only on $U$.} of $\bm f$ to $U$, $\bm f|_U: U \to \mathbb{R}^{n}$, is a linear function.
    \item For any $V \subset D$ satisfying $V \supsetneq U$, $\bm f|_V$ is not a linear function.
\end{itemize}
That is, a locally maximal subset $U$ of $D$ where $\bm f$ becomes a linear function within that region is called a linear region of $\bm f$. Also, we denote the set of all linear regions of $\bm f$ as $\mathcal{L} (\bm f)$ in this paper.
\end{definition}

\begin{example}[Linear regions of absolute value function]
The linear regions of the absolute value function $f(x) = |x|$ are $(-\infty, 0]$ and $[0,\infty)$, so $\mathcal{L}(f) = \{ (-\infty, 0], [0,\infty) \}$. Note that linear regions are closed sets, by its definition.
\end{example}

Here, note that when $\bm f: D \to \mathbb{R}^n$ is a piecewise linear function, the following holds:
\begin{align}
    \bigcup_{U \in \mathcal{L} (\bm f)} U = D.
\end{align}

\section{Previous Studies}

Conventionally, it is known that the following holds for the number of linear regions $|\mathcal{L} (\bm F)|$ of a ReLU Regression NN $\bm F: \mathbb{R}^{n_0} \to \mathbb{R}^{n_L}$ with depth $L$ and width $n$ at each layer.

\begin{proposition}[A lower bound for the number of linear regions of ReLU Regression NNs \citep{montufar}]\label{prop:previous}
For a ReLU Regression NN $\bm F_{\bm \theta}: \mathbb{R}^{n_0} \to \mathbb{R}^{n_L}$ with depth $L$ and width $n$ at each layer, let $p = \lfloor \frac{n}{n_0} \rfloor$. Then the following holds:
\begin{align}
   \max_{\bm \theta} |\mathcal{L} (\bm F_{\bm \theta})| \geq p^{n_0 (L-1)}.\label{eq:previous}
\end{align}
\end{proposition}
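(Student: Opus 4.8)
The plan is to prove Proposition~\ref{prop:previous} by an explicit construction: we exhibit a single parameter vector $\bm\theta$ for which $\bm F_{\bm\theta}$ already has at least $p^{n_0(L-1)}$ linear regions, which suffices since the claim is only a lower bound on the maximum over $\bm\theta$. The engine of the construction is a \emph{folding map}: a piecewise-linear self-map of a cube that each hidden layer can realize and that multiplies the number of linear regions by a factor of $p^{n_0}$.

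First I would build, for a single real variable, a continuous piecewise-linear ``zigzag'' $h:\mathbb{R}\to\mathbb{R}$ whose restriction to $[0,1]$ consists of exactly $p$ linear pieces, each mapped \emph{onto} $[0,1]$, with slopes alternating in sign (a triangle wave with $p$ monotone branches). Such an $h$ is an explicit finite combination of $p$ terms of the form $\max\{0,\,ax+b\}$, so a group of $p$ ReLU units computes it; verifying the formula is a routine algebra check. Using $n_0$ such groups, one hidden layer of width $n\ge n_0 p$ realizes the coordinatewise fold $\tilde h:\mathbb{R}^{n_0}\to\mathbb{R}^{n_0}$, $\tilde h(\bm x)_i = h(x_i)$; the leftover $n-n_0p$ units get zero weights (allowed by the definition) or carry identity activations harmlessly. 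Since the coordinates are decoupled, $\tilde h$ maps $[0,1]^{n_0}$ onto itself and partitions it into $p^{n_0}$ linear regions.

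Next I would compose $L-1$ folds and append the output layer, so that $\bm F_{\bm\theta}$ equals, as a function, a linear read-out $\mathbb{R}^{n_0}\to\mathbb{R}^{n_L}$ composed with $L-1$ copies of $\tilde h$. One subtlety to record is how this fits the layered definition: the linear part $\bm f^{(1)}$ expands the $n_0$ inputs into the $n_0p$ pre-activations of the first batch of ReLUs; each intermediate $\bm f^{(l)}$ first linearly contracts the previous layer's $p$-tuples back to the $n_0$ values $h(\cdot)$ and then re-expands them into the next batch of pre-activations; and $\bm f^{(L)}$ performs the final contraction composed with the read-out. Hence depth $L$ produces exactly $L-1$ folds. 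The heart of the count is the composition lemma: if $g:[0,1]\to[0,1]$ is piecewise linear with every piece surjective onto $[0,1]$ and slopes alternating in sign, then $h\circ g$ has $p$ times as many pieces, again each surjective with alternating slopes. Iterating, $\tilde h^{\circ(L-1)}$ restricted to $[0,1]^{n_0}$ has $p^{n_0(L-1)}$ linear regions.

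Finally I would check that the read-out does not merge regions: choosing $\bm f^{(L)}$ generically (or, when $n_L\ge n_0$, essentially a coordinate projection after the contraction) ensures that adjacent linear pieces of $\tilde h^{\circ(L-1)}$ retain distinct affine restrictions after composition, hence each survives as a linear region of $\bm F_{\bm\theta}$, giving $|\mathcal{L}(\bm F_{\bm\theta})|\ge p^{n_0(L-1)}$. I expect the main obstacles to be precisely the two bookkeeping points above: (i) proving the composition lemma cleanly — that folding genuinely multiplies the piece count and never collapses two adjacent pieces into one — and (ii) verifying that threading the fold maps through the layered, width-$n$ architecture (via the contract-then-expand trick) and through the final linear layer preserves the count rather than losing a factor.
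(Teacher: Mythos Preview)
Your proposal is correct and matches the paper's proof essentially line for line: the paper builds the same coordinatewise ``zigzag'' fold $\bm h:[0,1]^{n_0}\to[0,1]^{n_0}$ from $p$ ReLU units per coordinate, threads it through the layers via exactly your contract--then--expand decomposition (its $\tilde{\bm f}$ and $\tilde{\tilde{\bm f}}$), and counts regions by a composition lemma (Lemma~\ref{lemm}) stating that if each linear piece of the inner map is a bijection onto the full cube then region counts multiply. The only cosmetic difference is that the paper packages your composition step as a standalone lemma about bijective pieces rather than tracking alternating slopes, and it fixes a specific final linear map (all-ones sum) rather than appealing to genericity; your identification of the two bookkeeping obstacles is exactly what their Lemma~\ref{lemm} and explicit layer formulas are there to address.
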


This suggests that the number of linear regions of NNs increases in polynomial order with respect to network width and in exponential order with respect to depth, and is considered one piece of evidence showing that deepening the depth is more effective than widening the width for enhancing the expressivity of NNs.

The proof of Proposition \ref{prop:previous} in \citep{montufar} is given by specifically constructing a ReLU Regression NN such that the number of linear regions becomes $p^{n_0 (L-1)}$. However, while \citet{montufar} shows the construction procedure of the ReLU Regression NN, it does not concisely state the mathematical formulas of the linear functions of each layer constructed by this procedure. To make it easier to use in the proof of our theorem described later, we express this using mathematical formulas here. In the ReLU Regression NN constructed by the method of \citep{montufar}, the $n$ nodes of each intermediate layer are divided into $p$ groups of $n_0$ nodes each, and the linear function $f_{(i-1)n_0+j}^{(l)}(\bm x)$ corresponding to the $j$-th node of the $i$-th group in the $l$-th layer is expressed by the following formula (for the remaining $n-pn_0$ nodes, all weights are set to 0, so that 0 is identically output).

\begin{itemize}
   \item \textbf{The first layer:} for $l=1$, $i = 1, 2, \dots ,p$ and $j = 1, 2, \dots , n_0$, the $((i-1)n_0+j)$-th component $f_{(i-1)n_0+j}^{(1)}(\bm x)$ of $\bm f^{(1)}(\bm x)$ is expressed as follows:
   \begin{align}
       f_{(i-1)n_0+j}^{(1)} (\bm x) = \begin{cases}
           p x_j, & i = 1,\\
           2p x_j - 2(i-1), & 2 \leq i \leq p.
       \end{cases}\label{eq:f1}
   \end{align}
   \item \textbf{Intermediate layers:} for $l=2, 3, \dots , L-1$, $i = 1, 2, \dots ,p$ and $j = 1, 2, \dots , n_0$, the $((i-1)n_0+j)$-th component $f_{(i-1)n_0+j}^{(l)}(\bm x)$ of $\bm f^{(l)}(\bm x)$ is expressed as follows:
   \begin{align}
       f_{(i-1)n_0+j}^{(l)} (\bm x) = \begin{cases}
           p \sum_{k=1}^p (-1)^{k - 1} x_{(k - 1)n_0+j}, & i = 1,\\
           2p \sum_{k=1}^p (-1)^{k - 1} x_{(k - 1)n_0+j} - 2(i - 1), & 2 \leq i \leq p.
       \end{cases}\label{eq:fl}
   \end{align}
   \item \textbf{The last layer:} for $l=L$, $m = 1, 2, \dots ,n_L$, the $m$-th component $f_m^{(L)}(\bm x)$ of $\bm f^{(L)}(\bm x)$ is expressed as follows:
   \begin{align}
       f_m^{(L)} (\bm x) = \sum_{j=1}^{n_0} \sum_{k=1}^p (-1)^{k-1} x_{(k-1)n_0+j}.\label{eq:fL}
   \end{align}
\end{itemize}
We can represent these as a graph shown in Fig.~\ref{fig:previous}.

\begin{figure}
   \centering
   \includegraphics[width=0.8\linewidth]{./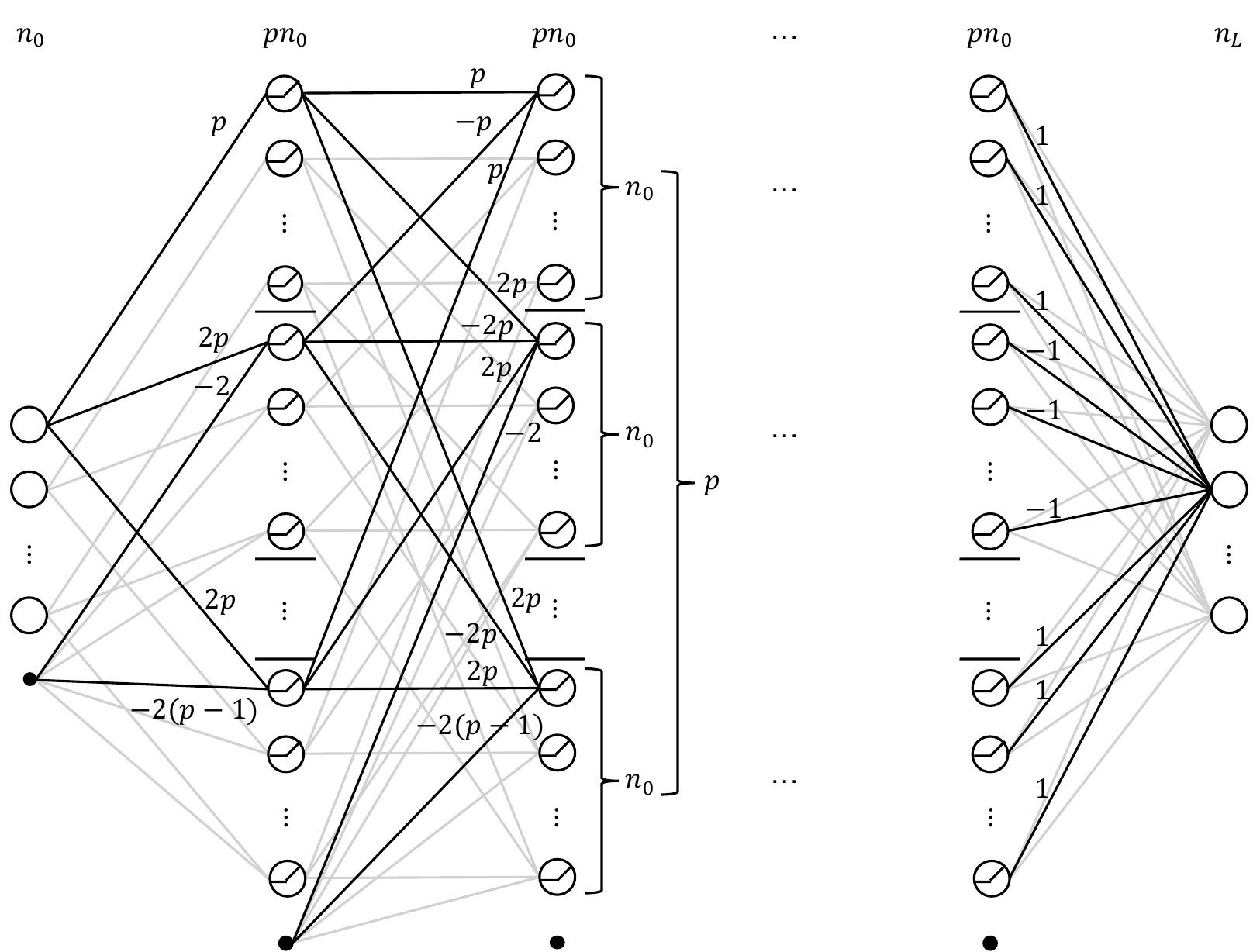}
   \caption{ReLU Regression NN with number of linear regions equal to $p^{n_0(L-1)}$.}
   \label{fig:previous}
\end{figure}

Although it is merely a rewrite of the proof in \citep{montufar} using equations \eqref{eq:f1}, \eqref{eq:fl}, and \eqref{eq:fL}, the proof that the number of linear regions of this ReLU Regression NN is indeed $p^{n_0 (L-1)}$ is summarized in Appendix.

\section{Main Results}

Before explaining the main results, we show that bounded integer weight regression NNs can be represented by ternary NNs. First, any edge of a bounded integer weight NN can be represented by a ternary regression NN with identity activation functions. Consider a NN where there exists some natural number $M$ such that the weights of linear functions are represented by integers in the range from $-M$ to $M$. Suppose one edge has integer weight $w$ as shown in Fig.~\ref{fig:edge_transformation} (a). That is, suppose this NN partially includes the process of multiplying the input value by $w$. This process of multiplying the input value by $w$ can be represented in a two-layer ternary regression NN with identity activation functions, where the number of nodes in each layer being $n_0=1$, $n_1 = M$, $n_2=1$, by setting
\begin{align}
   f_j^{(1)}(x) &= x, \quad (j = 1, 2, \dots , M)\label{eq:without_relu_from}\\
   \bm g^{(1)} (\bm y) &= \bm y, \\
   f^{(2)}(\bm z) &= \sum_{j=1}^{|w|}\mathrm{sign} (w) z_j\label{eq:without_relu_to}
\end{align}
where $\mathrm{sign}(w)$ represents the sign of $w$. This function is also represented as a graph shown in Fig.~\ref{fig:edge_transformation} (b-1).
Further, as shown in Fig.~\ref{fig:edge_transformation} (b-2), we represent the $M$ nodes in the intermediate layer in this graph by one square node and the $M$ edges to or from these nodes by triple lines. Note that the number near by the triple lines represents not an edge weight but the sum of the original edge weights in Fig.~\ref{fig:edge_transformation} (b-1).

\begin{figure}
   \centering
   \includegraphics[width=0.9\linewidth]{./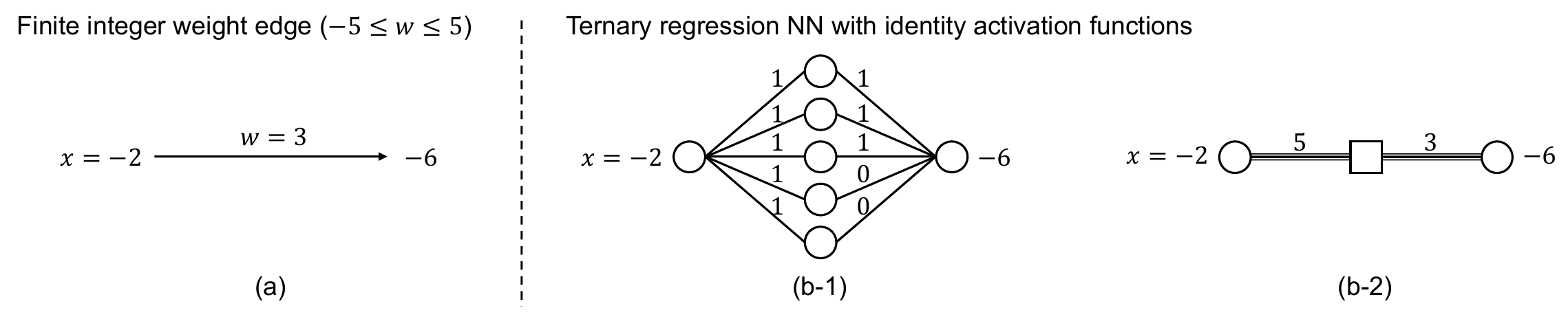}
   \caption{(a): A certain edge of a bounded integer weight NN. Here, the maximum weight is $M=5$ and the weight in this example is $w=3$. (b-1): Representation of (a) by a ternary regression NN with identity activation functions. (b-2): Abbreviated notation of (b-1). Note that the number near by the triple lines between the square node and the round node represents not an edge weight but the sum of the original edge weights in (b-1).
   }
   \label{fig:edge_transformation}
\end{figure}

Furthermore, when representing not an edge but a bounded integer weight NN with a ternary regression NN with identity activation functions, it is sometimes possible to represent it with fewer nodes than replacing each edge with the aforementioned transformation by combining the nodes of the ternary NN. For example, Fig.~\ref{fig:NN_transformation} shows ternary regression NNs with identity activation functions equivalent to a bounded integer weight NN (a). In Fig.~\ref{fig:NN_transformation}, (b) shows the ternary regression NN with identity activation functions by trivial transformation, (c-1) shows another representation of (a) by transformation with fewer nodes, and (c-2) shows its abbreviated notation. In Fig.~\ref{fig:NN_transformation} (c-2), edges extend from a square node to multiple nodes, with some numbers attached. Each of these edges represents the $M$ edges extending from the $M$ nodes in (c-1), which correspond to the square node in (c-2), and the number represents the sum of the weights of those $M$ edges.

\begin{figure}
   \centering
   \includegraphics[width=\linewidth]{./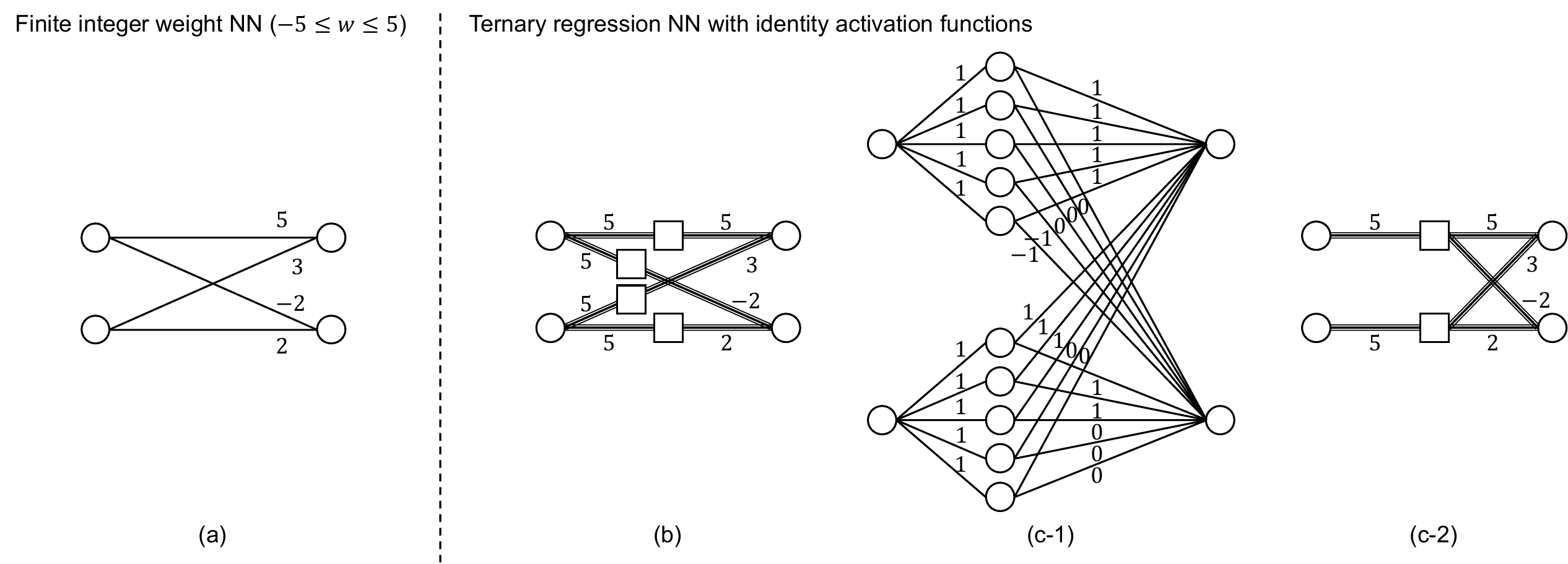}
   \caption{(a): A bounded integer weight NN where weight $w$ satisfies $-5 \leq w \leq 5$. (b): Representation of (a) by a ternary regression NN with identity activation functions through trivial transformation. (c-1): Representation of (a) by a ternary regression NN with identity activation functions through transformation with fewer nodes. (c-2): Abbreviated notation of (c-1). Note that the number near by the triple lines between the square node and the round node represents not an edge weight but the sum of the original edge weights.
   }
   \label{fig:NN_transformation}
\end{figure}

Next, we state the main result of this research and its proof. In the following, for a ReLU Regression NN with depth $L$ and width $n$, we consider a ternary regression NN with the same depth $L$ and width $n$, and evaluate the lower bound of the maximum number of linear regions that can be represented by adjusting the edge weights. The proof strategy is as follows. We utilize the fact that the coefficients and biases of the linear functions \eqref{eq:f1}, \eqref{eq:fl}, \eqref{eq:fL} used in the proof of Proposition \ref{prop:previous} are bounded integers. We also leverage the relationship between bounded integer weight NNs and ternary regression NNs with identity activation functions mentioned above. Based on these observations, we first construct a ternary regression NN with alternating ReLU and identity layers that represents functions of the same form as \eqref{eq:f1}, \eqref{eq:fl}, \eqref{eq:fL}. The lower bound when using ReLU in all the layers are discussed later. The following is the main result of this research.

\begin{theorem}[A lower bound of the number of linear regions of ternary NN with ReLU after even layer]\label{theo}
For a ternary NN $\bm F_{\bm \theta}: \mathbb{R}^{n_0} \to \mathbb{R}^{n_L}$ with depth $L$ and width $n$ at each layer, where the activation function of odd-numbered layers is the identity function and the activation function of even-numbered layers is ReLU, let $L'$ be the maximum odd number less than or equal to $L$ and $q = \lfloor \frac{n}{2(n_0+1)} \rfloor$. Then the following holds:
\begin{align}
   \max_{\bm \theta} |\mathcal{L} (\bm F_{\bm \theta})| \geq q^{\frac{1}{2} n_0 (L'-1)}.\label{eq:main_result1}
\end{align}
\end{theorem}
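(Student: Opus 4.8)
The plan is to combine the explicit form \eqref{eq:f1}--\eqref{eq:fL} of Montúfar's network with the edge-simulation idea of \eqref{eq:without_relu_from}--\eqref{eq:without_relu_to}. Put $\tilde L := (L'+1)/2$, which is an integer since $L'$ is odd, and apply Proposition~\ref{prop:previous} to the ReLU Regression NN $\bm G$ of depth $\tilde L$ and width $\tilde n := q\,n_0$ given by \eqref{eq:f1}, \eqref{eq:fl}, \eqref{eq:fL} with $p$ replaced by $q$ (note $\lfloor \tilde n/n_0\rfloor = q$); then $|\mathcal L(\bm G)| = q^{\,n_0(\tilde L-1)} = q^{\,\frac12 n_0(L'-1)}$. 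Since $\mathcal L(\cdot)$ depends only on the input--output map, it suffices to exhibit a ternary NN $\bm F_{\bm\theta}$ of depth $L$ and width $n$, with identity activation on odd layers and ReLU on even layers, realizing the same function as $\bm G$; then $\max_{\bm\theta}|\mathcal L(\bm F_{\bm\theta})| \ge |\mathcal L(\bm G)| = q^{\frac12 n_0(L'-1)}$. (If $L$ is even then $L'=L-1$; we take $\bm f^{(L)}$ to be a ternary coordinate projection and note $\bm g^{(L-1)}$ is the identity by hypothesis, so it suffices to treat $L=L'$. The case $L'=1$ is vacuous, the bound being $1$.)

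Next I would build $\bm F$ in blocks of two ternary layers. Writing a generic linear map of $\bm G$ coordinatewise on its group-structured input as $z_{(i-1)n_0+j} = c_i S_j - b_i$ with $S_j := \sum_k (-1)^{k-1} y_{(k-1)n_0+j}$ (coefficients in $\{-1,0,1\}$), $c_1=q$, $c_i=2q$ and $b_i=2(i-1)$ for $2\le i\le q$, block $l$ (for $1\le l\le \tilde L-1$) occupies ternary layers $2l-1$ (identity) and $2l$ (ReLU). Layer $2l-1$ is a ``spreading'' layer: for each $j\in\{1,\dots,n_0\}$ it outputs $2q$ identical copies of $S_j$ (a ternary combination of the previous block's output, or simply $x_j$ when $l=1$) and $2q$ ``constant'' nodes each equal to $1$ (created by a unit bias when $l=1$, propagated by weight-$1$ edges afterwards; a value $1$ is unchanged by $\mathrm{ReLU}$). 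Layer $2l$ is a ``collecting'' layer: it forms $z_{(i-1)n_0+j}$ by summing $c_i\in\{q,2q\}$ of the copies of $S_j$ (weight $+1$) and subtracting $b_i$ of the constant nodes (weight $-1$), then applies $\mathrm{ReLU}$, while also carrying $2q$ constant nodes forward. The last ternary layer $L'=2\tilde L-1$ (identity) realizes \eqref{eq:fL}, whose coefficients already lie in $\{-1,0,1\}$. Every weight and bias is thus in $\{-1,0,1\}$, the activation pattern matches the hypothesis, and composing the blocks reproduces $\bm G$ exactly.

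Then comes the width check, which is really the point of the constants. A spreading layer has $2q n_0$ copy-nodes and $2q$ constant nodes, i.e.\ width $2q(n_0+1)\le n$ by definition of $q$; a collecting layer has $q n_0$ output-nodes and $2q$ constant nodes, i.e.\ width $q(n_0+2)\le 2q(n_0+1)\le n$; the final layer has width $n_L$, exactly as in $\bm G$. The crucial observations are that each $c_i\le 2q$, so $2q$ copies of $S_j$ always suffice, and each bias $b_i\le 2(q-1)<2q$, so $2q$ constant nodes always suffice---recalling that a single node can contribute a coefficient of magnitude at most $1$, so a bias $2(i-1)$ must be assembled from that many distinct weight-$(-1)$ edges. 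Padding every layer up to width exactly $n$ with identically-zero nodes completes the construction and proves the bound.

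I expect the main obstacle to be precisely this width accounting: one must see simultaneously that the coefficients of $\bm G$ reach $2q$ (forcing $2q$ copies of each $S_j$) and its biases reach $2(q-1)$ (forcing about $2q$ auxiliary constant nodes), so that a ternary layer needs $\approx 2q(n_0+1)$ nodes---which is exactly why the theorem pays a factor $n_0+1$ in width and a factor $2$ in depth relative to Proposition~\ref{prop:previous}. Secondary, routine matters are checking that a ternary-parametrized constant channel survives the alternating identity/ReLU layers and dealing with the parity of $L$ as noted above.
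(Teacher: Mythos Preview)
Your proposal is correct and follows essentially the same route as the paper: both simulate Mont\'ufar's network (with $p$ replaced by $q$) by expanding each of its affine layers into a two-layer ternary block---an odd ``spreading'' layer that makes $2q$ copies of each of the $n_0$ signals together with $2q$ constant-$1$ nodes, followed by an even ``collecting'' ReLU layer that assembles the integer coefficients $c_i\le 2q$ and biases $b_i\le 2(q-1)$ by summing the right number of $\pm 1$ edges---and both verify the decisive width bound $2q(n_0+1)\le n$. The only cosmetic difference is that you propagate the constant-$1$ channel through every layer (hence your collecting layers carry an extra $2q$ nodes), whereas the paper simply regenerates the constant via a unit bias at each odd layer and keeps the even layers at width $qn_0$; both fit inside $n$, so this changes nothing.
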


\begin{proof}
By constructing the ternary regression NN with alternating ReLU and identity layers shown in Fig.~\ref{fig:ternary1}, the function represented by this NN becomes the function obtained by replacing $p$ with $q = \lfloor \frac{n}{2(n_0+1)} \rfloor$ in \eqref{eq:f1}, \eqref{eq:fl}, \eqref{eq:fL}. In the following, we describe the specific construction method of this NN and the mathematical expression of the function represented by it.

\begin{figure}
   \centering
   \includegraphics[width=\linewidth]{./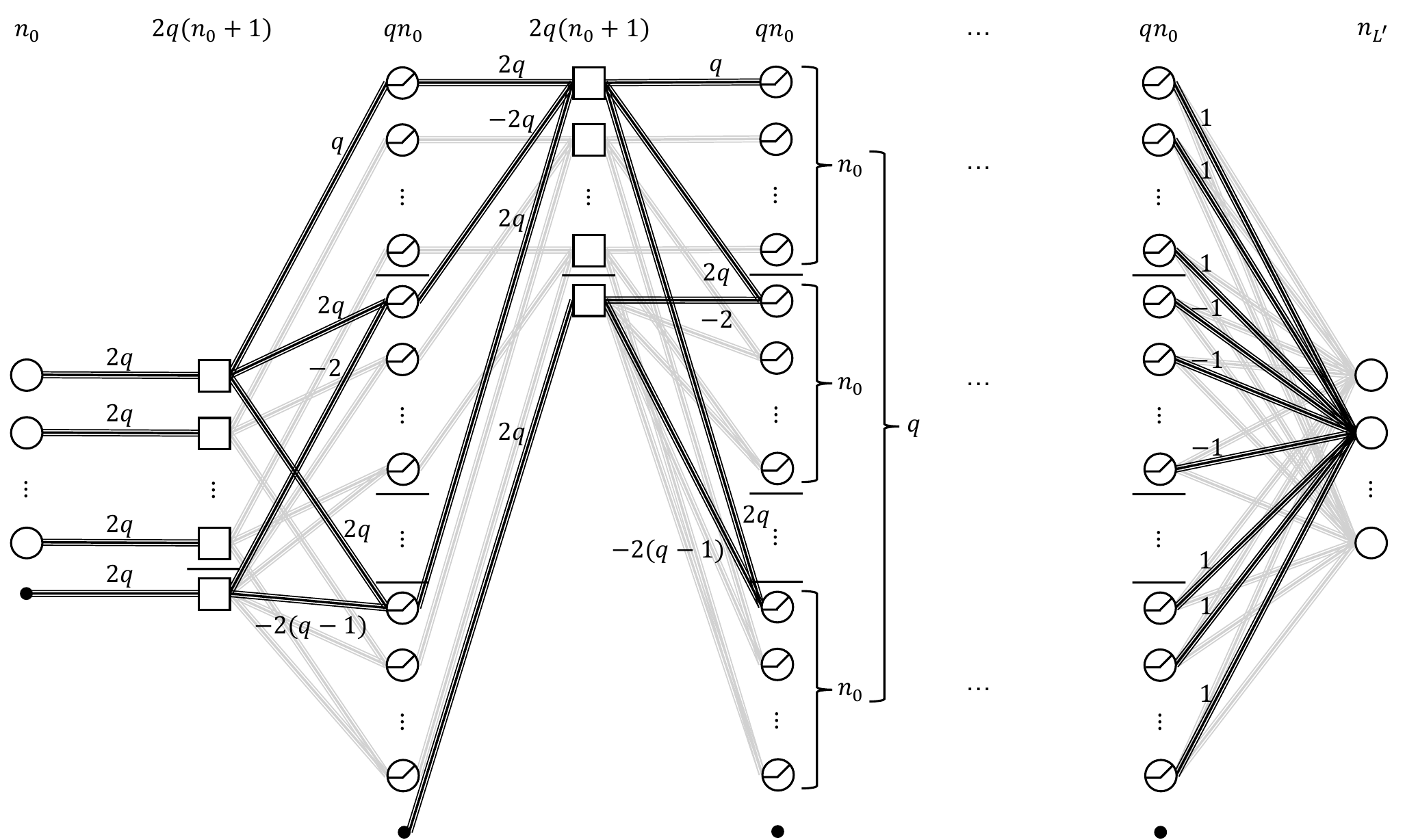}
   \caption{A ternary regression NN whose number of linear regions equal to $q^{\frac{1}{2}n_0(L'-1)}$, where ReLU and identity layers are alternately used. Note that the number near by the triple lines between the square node and the round node represents not an edge weight but the sum of the original edge weights.}
   \label{fig:ternary1}
\end{figure}

Let $L'$ be the maximum odd number less than or equal to $L$. If $L$ is even, we set the final layer to be an identity transformation, and hereafter we construct a ternary regression NN with depth $L'$ and width $n$ at each layer, where ReLU and identity layers are alternately used. Setting $q = \lfloor \frac{n}{2(n_0+1)} \rfloor$, we divide the $n$ nodes of the odd layers of this NN into $n_0+1$ groups of $2q$ nodes each. In Fig.~\ref{fig:ternary1}, these $2q$ nodes are represented by one square node. For the remaining $n-2q(n_0+1)$ nodes, all weights are set to 0, making them functions that identically output 0. For the nodes of even layers, we create $q$ groups of $n_0$ nodes each, and for the remaining $n-qn_0$ nodes, all weights are set to 0, so that they identically output 0. First, since equation \eqref{eq:fL} already takes only values from $\{1, 0, -1\}$ for any weights, we can use the linear function obtained by replacing $p$ with $q$ in equation \eqref{eq:fL} for the $L'$-th layer. Next, for $l = 1, 2, \dots , \frac{1}{2}(L'-1)$, we define the $(2l-1)$-th layer and the $2l$-th layer as follows:
\begin{itemize}
   \item For $l=1$:
   \begin{itemize}
       \item Definition of $\bm f^{(1)}$: For $j = 1, 2, \dots , n_0+1$ and $k = 1, 2, \dots , 2q$, define the $(2q(j-1)+k)$-th component $f_{2q(j-1)+k}^{(1)}(\bm x)$ of $\bm f^{(1)}(\bm x)$ as follows:
       \begin{align}
           f_{2q(j-1)+k}^{(1)}(\bm x) = \begin{cases}
               x_j, & j = 1, 2, \dots , n_0\\
               1, & j = n_0+1.
           \end{cases}
       \end{align}
       \item Definition of $\bm f^{(2)}$: For $i = 1, 2, \dots , q$ and $j = 1, 2, \dots , n_0$, define the $((i-1)n_0+j)$-th component $f_{(i-1)n_0+j}^{(2)}(\bm x)$ of $\bm f^{(2)}(\bm x)$ as follows:
       \begin{align}
           f_{(i-1)n_0+j}^{(2)}(\bm x) = \begin{cases}
               \sum_{k=1}^q x_{2q(j-1)+k}, & i = 1\\
               \sum_{k=1}^{2q} x_{2q(j-1)+k} + \sum_{k=1}^{2(i-1)}(-1), & i = 2, 3, \dots , q.
           \end{cases}
       \end{align}
   \end{itemize}
   \item For $l=2, 3, \dots , \frac{1}{2}(L'-1)$:
   \begin{itemize}
       \item Definition of $\bm f^{(2l-1)}$: For $j = 1, 2, \dots , n_0+1$ and $k = 1, 2, \dots , 2q$, define the $(2q(j-1)+k)$-th component $f_{2q(j-1)+k}^{(2l-1)}(\bm x)$ of $\bm f^{(2l-1)}(\bm x)$ as follows:
       \begin{align}
           f_{2q(j-1)+k}^{(2l-1)}(\bm x) = \begin{cases}
               \sum_{m=1}^q (-1)^{m-1} x_{(m-1)n_0+j}, & j = 1, 2, \dots , n_0\\
               1, & j = n_0+1.
           \end{cases}
       \end{align}
       \item Definition of $\bm f^{(2l)}$: For $i = 1, 2, \dots , q$ and $j = 1, 2, \dots , n_0$, define the $((i-1)n_0+j)$-th component $f_{(i-1)n_0+j}^{(2l)}(\bm x)$ of $\bm f^{(2l)}(\bm x)$ as follows:
       \begin{align}
           f_{(i-1)n_0+j}^{(2l)}(\bm x) = \begin{cases}
               \sum_{k=1}^q x_{2q(j-1)+k}, & i = 1\\
               \sum_{k=1}^{2q} x_{2q(j-1)+k} + \sum_{k=1}^{2(i-1)}(-1), & i = 2, 3, \dots , q.
           \end{cases}
       \end{align}
   \end{itemize}
\end{itemize}
For activation functions, we use the identity function for odd layers and ReLU for even layers.

Since the activation function of odd layers is the identity function, when we substitute $\bm f^{(1)}$ into $\bm f^{(2)}$, we can see that $\bm f^{(2)} \circ \bm f^{(1)}$ of the ternary regression NN equals the function obtained by replacing $p$ with $q$ in the linear function \eqref{eq:f1} of the first layer of a regular ReLU Regression NN. Also, for $l=2, 3, \dots , \frac{1}{2}(L'-1)$, when we substitute $\bm f^{(2l-1)}$ into $\bm f^{(2l)}$, we can see that $\bm f^{(2l)} \circ \bm f^{(2l-1)}$ of the ternary regression NN equals the function obtained by replacing $p$ with $q$ in the linear function \eqref{eq:fl} of the $l$-th layer of a regular NN. Therefore, the ternary regression NN shown here achieves the lower bound of the number of linear regions obtained by replacing $p$ with $q$ and $L-1$ with $\frac{1}{2}(L'-1)$ in the right-hand side of equation \eqref{eq:previous} for a regular ReLU Regression NN. That is, the number of linear regions of this ternary regression NN becomes $q^{\frac{1}{2}n_0(L'-1)}$.
\end{proof}

\begin{remark}
Comparing equation \eqref{eq:previous} and equation \eqref{eq:main_result1}, roughly speaking, to obtain a lower bound of the maximum number of linear regions comparable to that for general ReLU Regression NNs, it suffices to first double the width, then square the width or double the depth of ternary NNs where the activation function of odd-numbered layers is the identity function and the activation function of even-numbered layers is ReLU.
\end{remark}

\begin{figure}
   \centering
   \includegraphics[width=0.8\linewidth]{./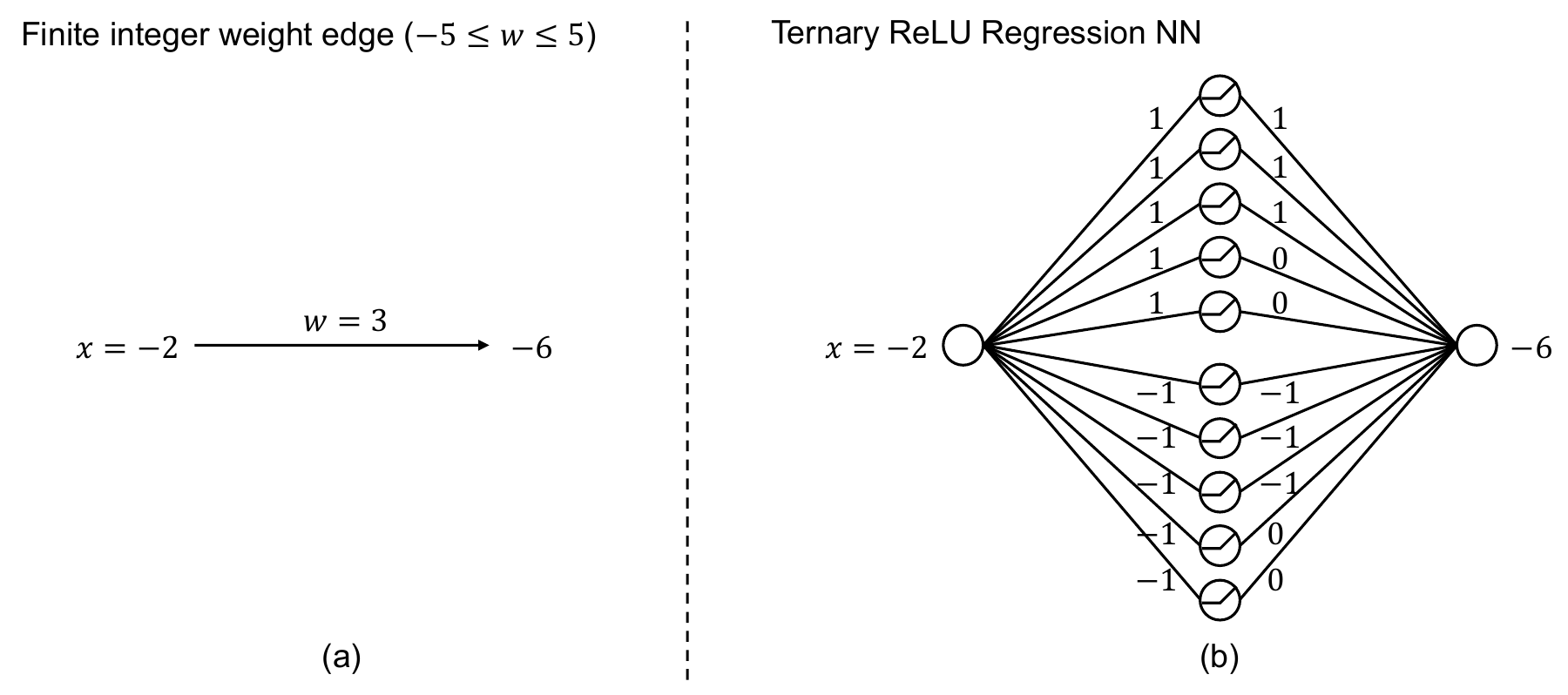}
   \caption{(a): A certain edge of a bounded integer weight NN (the same as in Fig.~\ref{fig:edge_transformation} (a)). Here, the maximum weight is $M=5$ and the weight in this example is $w=3$. (b): Representation of (a) by a ternary ReLU Regression NN}
   \label{fig:edge_transformation_ReLU}
\end{figure}

\begin{remark}
The identity function can be represented by two ReLU functions as follows: $x = \mathrm{ReLU}(x) - \mathrm{ReLU}(-x)$. Using this equation, Theorem \ref{theo} can be extended to ternary regression NNs with ReLU in all the layers. For example, a finite integer weight edge as shown in Fig.~\ref{fig:edge_transformation_ReLU} (a), which is the same as in Fig~\ref{fig:edge_transformation} (a), can be represented by a ternary ReLU Regression NN as shown in Fig.~\ref{fig:edge_transformation_ReLU} (b). By replacing all the square nodes in Fig.~\ref{fig:ternary1} with this method, we can achieve a bound similar to Theorem \ref{theo} using a ternary ReLU Regression NN with double the width of the ternary NN with alternating ReLU and identity layers.
\end{remark}

\section{Conclusion and Future Work}

We theoretically evaluated the expressivity of ternary NNs, which have achieved great practical success in memory and computation reduction of NNs, from the perspective of the number of linear regions. As a result, it was shown that the expressivity of ternary ReLU Regression NNs increases polynomially with respect to network width and exponentially with respect to depth, similar to general ReLU Regression NNs. Furthermore, it was found that it suffices to first double the width, then square the width or double the depth of ternary regression NNs with alternating ReLU and identity layers to obtain a lower bound on the maximum number of linear regions comparable to that for general ReLU Regression NNs. When using ReLU in all the layers, a similar bound is obtained by further doubling the width. We believe this provides an explanation of a part of the reason for the practical success of ternary NNs, albeit from the limited perspective of the number of linear regions of piecewise linear functions represented by ReLU NNs.

However, in actual applications such as BitNet b1.58, the output of activation functions are also quantized for further memory and computation reduction. Our method cannot be directly applied to such NNs. Theoretical evaluation of the expressivity of such NNs is a future research topic. Moreover, the evaluation of approximation accuracy of functions by ternary NNs remains an important direction for future work. Applying the expressivity bound derived in this paper to a practical example is also important, e.g., investigating how well a real-world ternary network satisfies the bound.

\subsubsection*{Acknowledgments}

We thank Professor Suko at Waseda University for providing the motivation for this research. This work was supported in part by JSPS KAKENHI Grant Numbers JP22K02811,
JP23K03863, JP23K04293, JP24H00370 and JP26K17386.

\bibliographystyle{tmlr}

\appendix

\section{Proof of Proposition 1}\label{appendix}

Before proving Proposition \ref{prop:previous}, we show the following Lemma.

\begin{lemma}[The number of linear regions of a composite function of piecewise linear functions]\label{lemm}
Let $\bm g: D_1 \to D_2$ be a piecewise linear function, and suppose that for any linear region $U \in \mathcal{L}(\bm g)$ of $\bm g$, $\bm g|_U$ is a bijection from $U$ to $D_2$. Let $\bm f: D_2 \to \mathbb{R}^n$ be a piecewise linear function. Note that the domain of $\bm f$ coincides with the range of $\bm g$. Then, the following holds for the number of linear regions of the composite function $\bm f \circ \bm g$:
\begin{align}
    |\mathcal{L}(\bm f \circ \bm g)| = |\mathcal{L}(\bm f)| |\mathcal{L}(\bm g)|.
\end{align}
\end{lemma}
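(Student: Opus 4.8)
The plan is to set up an explicit bijection between $\mathcal{L}(\bm f\circ\bm g)$ and $\mathcal{L}(\bm g)\times\mathcal{L}(\bm f)$. For each $U\in\mathcal{L}(\bm g)$ the hypothesis says $\bm g|_U:U\to D_2$ is a bijection, and since $\bm g|_U$ is affine, its inverse $(\bm g|_U)^{-1}:D_2\to U$ is affine as well. For $V\in\mathcal{L}(\bm f)$ I would put $W_{U,V}:=(\bm g|_U)^{-1}(V)\subseteq U$ and first dispatch the routine facts: (i) the $W_{U,V}$ cover $D_1$, since $\bigcup_U U=D_1$ and, for fixed $U$, $\bigcup_V W_{U,V}=(\bm g|_U)^{-1}(\bigcup_V V)=(\bm g|_U)^{-1}(D_2)=U$, using that $\bigcup_V V=D_2$ because $\bm f$ is piecewise linear; (ii) $(\bm f\circ\bm g)|_{W_{U,V}}=(\bm f|_V)\circ(\bm g|_U)$ is a composition of affine maps, hence affine; (iii) the map $(U,V)\mapsto W_{U,V}$ is injective — $W_{U,V}$ has nonempty interior, distinct linear regions of $\bm g$ have disjoint interiors (a quick consequence of the maximality clause in the definition of linear region), so $W_{U,V}=W_{U',V'}$ forces $U=U'$, and then injectivity of $\bm g|_U$ forces $V=V'$.

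The real content is to show that the $W_{U,V}$ are \emph{precisely} the linear regions of $\bm f\circ\bm g$. By (ii) each $W_{U,V}$ lies inside some $R\in\mathcal{L}(\bm f\circ\bm g)$; if in turn every $R\in\mathcal{L}(\bm f\circ\bm g)$ lies inside some $W_{U,V}$, then a short nesting argument (again using that two $W$'s can be nested only when equal, by the disjoint-interior property) gives $R=W_{U,V}$, and counting via (iii) yields $|\mathcal{L}(\bm f\circ\bm g)|=|\mathcal{L}(\bm g)|\,|\mathcal{L}(\bm f)|$. The inclusion "$R\subseteq W_{U,V}$" reduces to maximality of $W_{U,V}$: suppose $W'\supsetneq W_{U,V}$ with $(\bm f\circ\bm g)|_{W'}$ affine. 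If $W'$ reaches a point of $U\setminus W_{U,V}$, I would push forward by $\bm g|_U$ — the set $\bm g|_U(W'\cap U)$ strictly contains $V$, and on it $\bm f$ equals $(\bm f\circ\bm g)|_{W'\cap U}\circ(\bm g|_U)^{-1}$, which is affine, contradicting $V\in\mathcal{L}(\bm f)$. The remaining possibility is that $W'$ reaches outside $U$, into a neighboring region $U'$ of $\bm g$.

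That last case is the main obstacle. One wants to argue that $\bm f\circ\bm g$ must acquire a genuine kink when crossing the boundary between $U$ and $U'$: by the chain rule its linear part there is (linear part of $\bm f|_V$)$\cdot$(linear part of $\bm g|_U$ or $\bm g|_{U'}$), and the linear part of $\bm g$ really does change across that boundary, so one would like the product to change too. This is the delicate point, because a "flat" outer piece can cancel a fold of $\bm g$ — in the extreme, if some $\bm f|_V$ is constant its zero linear part annihilates the jump — so the argument must bring in the bijectivity hypothesis on $\bm g$ (which forces each $\bm g|_U$ to have full-rank linear part) together with a non-degeneracy of $\bm f$ on the regions adjacent to the image of the $U$–$U'$ boundary, and it must make the topological notion of "adjacent regions" precise. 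In the setting where the lemma is applied the outer map has the same explicit sawtooth form as \eqref{eq:f1}–\eqref{eq:fL}, every region-restriction of which is itself a bijection onto the full cube, so this cancellation does not occur and the obstacle is mild there; establishing it cleanly in the stated generality is where essentially all of the work lies.
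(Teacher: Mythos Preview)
Your construction of the sets $W_{U,V}=(\bm g|_U)^{-1}(V)$ and the verification that each is a linear region of the \emph{restriction} $\bm f\circ\bm g|_U$ is exactly what the paper does. The paper checks maximality only against points $x\in U\setminus W_{U,V}$, concludes that $W_{U,V}\in\mathcal{L}(\bm f\circ\bm g|_U)$, and then simply asserts $|\mathcal{L}(\bm f\circ\bm g)|=|\mathcal{L}(\bm f)|\,|\mathcal{L}(\bm g)|$ from the fact that each $U$ decomposes into $|\mathcal{L}(\bm f)|$ such pieces. It never considers the possibility that a linear region of $\bm f\circ\bm g$ straddles two regions $U,U'$ of $\bm g$ --- precisely the obstacle you isolate.

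That obstacle is genuine: your own parenthetical about a constant $\bm f|_V$ already furnishes a counterexample to the lemma as stated (take $g(x)=|x|$ on $[-1,1]$ and $f\equiv 0$ on $[0,1]$; then $|\mathcal{L}(f\circ g)|=1\neq 2=|\mathcal{L}(f)|\,|\mathcal{L}(g)|$). So the lemma needs an additional non-degeneracy hypothesis on $\bm f$, and your instinct that ``essentially all of the work lies'' in the boundary case is correct. In the paper's sole application every region-restriction of the outer map is again a bijection onto the cube, so the cancellation cannot occur and the conclusion survives; you have diagnosed both the gap and the reason the application is unaffected more carefully than the paper itself does.
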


\begin{proof}[Proof of Lemma \ref{lemm}]
For any $U \in \mathcal{L}(\bm g)$ and any $V \in \mathcal{L}(\bm f)$, consider the inverse image of $V$ under $\bm g|_U$, denoted $\bm g|_U^{-1}(V)$. On $\bm g|_U^{-1}(V)$, $\bm f \circ \bm g|_U$ is clearly a linear function. Also, on the set $\bm g|_U^{-1}(V) \cup \{ x \}$ obtained by adding any point $x \in U \setminus \bm g|_U^{-1}(V)$ to $\bm g|_U^{-1}(V)$, $\bm f \circ \bm g|_U$ does not become a linear function. This is because, due to the bijectivity of $\bm g|_U$, we have $\bm g|_U (\bm g|_U^{-1}(V) \cup \{ x \}) \supsetneq V$, and by the definition of linear region $V$, $\bm f|_{\bm g|_U (\bm g|_U^{-1}(V) \cup \{ x \})}$ does not become a linear function. Therefore, $\bm g|_U^{-1}(V)$ is a linear region of $\bm f \circ \bm g|_U$.

Also, the following holds:
\begin{align}
    U &= \bm g|_U^{-1} (D_2) && \because \text{$\bm g|_U$ is a bijection from $U$ to $D_2$}\\
    &= \bm g|_U^{-1} \left( \textstyle \bigcup_{V \in \mathcal{L}(\bm f)} V \right) && \because \text{$\bm f$ is a piecewise linear function} \\
    &= \bigcup_{V \in \mathcal{L}(\bm f)} \bm g|_U^{-1} (V). && \because \text{$\bm g|_U$ is a bijection from $U$ to $D_2$}
\end{align}
This indicates that $U$ is divided into $|\mathcal{L}(\bm f)|$ linear regions $\bm g|_U^{-1} (V)$ of $\bm f \circ \bm g|_U$. Since the same holds for any $U \in \mathcal{L}(\bm g)$, we have $|\mathcal{L}(\bm f \circ \bm g)| = |\mathcal{L}(\bm f)| |\mathcal{L}(\bm g)|$.
\end{proof}

Next, we prove Proposition \ref{prop:previous}.

\begin{proof}[Proof of Proposition \ref{prop:previous}]
First, we define the following three functions $\tilde{\bm f}: [0,1]^{n_0} \to \mathbb{R}^{p n_0}$, $\tilde{\tilde{\bm f}}: \mathbb{R}^{p n_0} \to [0,1]^{n_0}$, and $\tilde{\tilde{\tilde{\bm f}}}: [0,1]^{n_0} \to \mathbb{R}^{n_L}$. Note the domain and range of each function.
\begin{itemize}
    \item Definition of $\tilde{\bm f}: [0,1]^{n_0} \to \mathbb{R}^{p n_0}$: For any $i = 1, 2, \dots , p$ and $j = 1, 2, \dots , n_0$, define the $((i-1)n_0+j)$-th component $\tilde{f}_{(i-1)n_0+j}(\bm x)$ of $\tilde{\bm f}(\bm x)$ as follows:
    \begin{align}
        \tilde{f}_{(i-1)n_0+j}(\bm x) = \begin{cases}
        p x_j, & i = 1\\
        2p x_j - 2(i-1), & i = 2, 3, \dots , p.
        \end{cases}
    \end{align}
    \item Definition of $\tilde{\tilde{\bm f}}: \mathbb{R}^{p n_0} \to [0,1]^{n_0}$: For any $j = 1, 2, \dots , n_0$, define the $j$-th component $\tilde{\tilde{f}}_j (\bm x)$ of $\tilde{\tilde{\bm f}}(\bm x)$ as follows:
    \begin{align}
        \tilde{\tilde{f}}_j (\bm x) = \sum_{k=1}^p (-1)^{k-1} x_{(k-1)n_0+j}.
    \end{align}
    \item Definition of $\tilde{\tilde{\tilde{\bm f}}}: [0,1]^{n_0} \to \mathbb{R}^{n_L}$: For any $m = 1, 2, \dots , n_L$, define the $m$-th component $\tilde{\tilde{\tilde{f}}}_m (\bm x)$ of $\tilde{\tilde{\tilde{\bm f}}}(\bm x)$ as follows:
    \begin{align}
        \tilde{\tilde{\tilde{f}}}_m (\bm x) = \sum_{j=1}^{n_0} x_j.
    \end{align}
\end{itemize}

For simplicity, we restrict the domain of the NN to $[0,1]^{n_0}$. Then, $\bm F_{\bm \theta} (\bm x)$ can be expressed using these functions and ReLU $\bm g$ as follows:
\begin{align}
    \bm F_{\bm \theta} (\bm x) &= \bm f^{(L)} \circ \bm g^{(L-1)} \circ \bm f^{(L-1)} \circ \dots \circ \bm g^{(1)} \circ \bm f^{(1)} (\bm x) \\
    &= \tilde{\tilde{\tilde{\bm f}}} \circ \underbrace{\tilde{\tilde{\bm f}} \circ \bm g \circ \tilde{\bm f}}_{\bm h} \circ \underbrace{\tilde{\tilde{\bm f}} \circ \bm g \circ \tilde{\bm f}}_{\bm h} \circ \cdots \circ \underbrace{\tilde{\tilde{\bm f}} \circ \bm g \circ \tilde{\bm f}}_{\bm h} \\
    &= \tilde{\tilde{\tilde{\bm f}}} \circ \bm h \circ \bm h \circ \cdots \circ \bm h.
\end{align}
Note that $\bm h = \tilde{\tilde{\bm f}} \circ \bm g \circ \tilde{\bm f}$ is a function from $[0,1]^{n_0}$ to $[0,1]^{n_0}$. Also, the $j$-th component of $\bm h(\bm x)$ is expressed as
\begin{align}
    h_j(\bm x) = &\max \{ 0, p x_j \} - \max \{ 0, 2p x_j - 2 \} + \nonumber \\
    &\quad \cdots + (-1)^{p-1} \max \{ 0, 2p x_j - 2(p-1) \}
\end{align}
and is a one-dimensional piecewise linear function that depends only on $x_j$, as shown in Fig.~\ref{fig:h_j}.

\begin{figure}
    \centering
    \includegraphics[width=0.5\linewidth]{./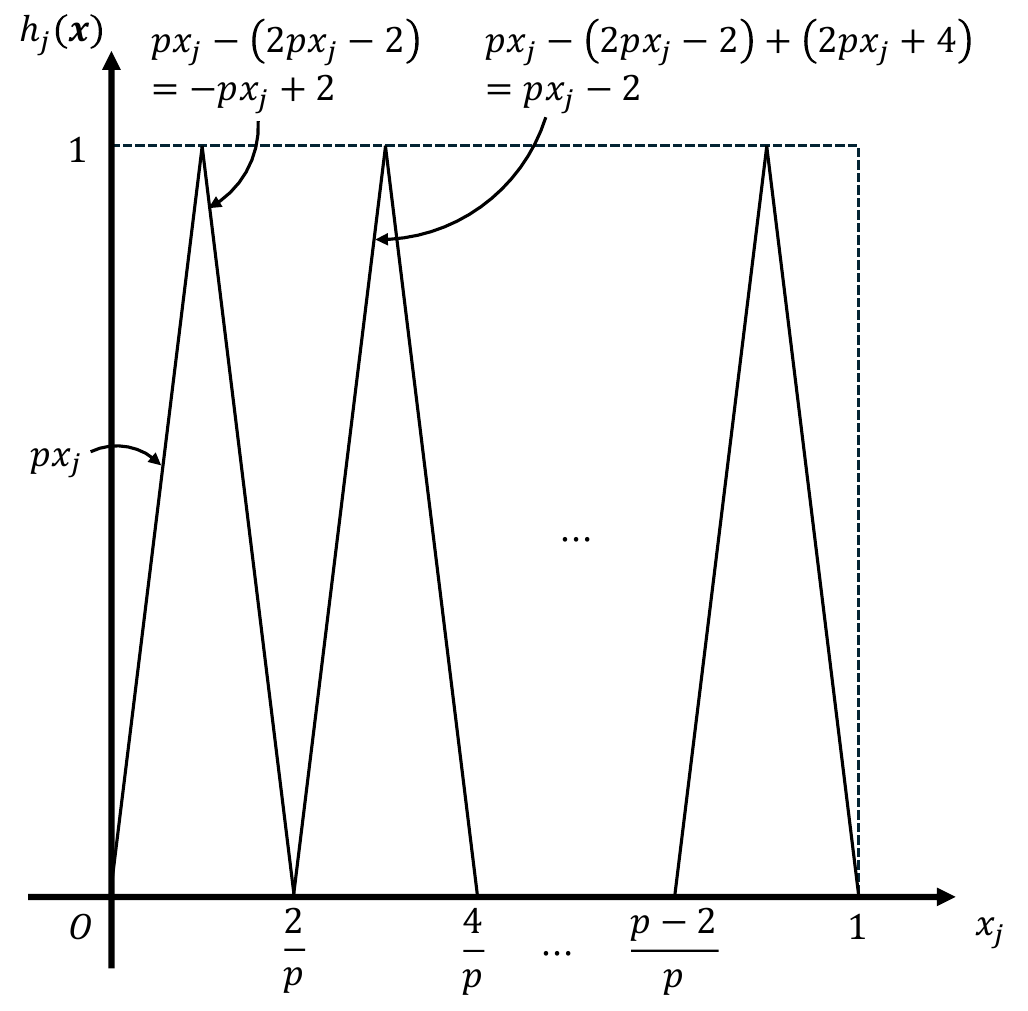}
    \caption{A graph of $h_j(\bm x)$}
    \label{fig:h_j}
\end{figure}

Therefore, for any $t \in \{ 0, 1, \dots , p-1 \}$, an interval $\left[ \frac{t}{p}, \frac{t+1}{p} \right]$ becomes a linear region of $h_j$, and $h_j|_{\left[ \frac{t}{p}, \frac{t+1}{p} \right]}$ becomes a bijection to $[0,1]$. Since $\bm h$ is a function consisting of $h_j$ as each component, for any $(t_1, t_2, \dots , t_{n_0}) \in \{ 0, 1, \dots , p-1 \}^{n_0}$, the Cartesian product $\prod_{j=1}^{n_0} \left[ \frac{t_j}{p}, \frac{t_j+1}{p}\right]$ becomes a linear region of $\bm h$, and $\bm h|_{\prod_{j=1}^{n_0} \left[ \frac{t_j}{p}, \frac{t_j+1}{p}\right]}$ becomes a bijection to $[0, 1]^{n_0}$.

Here, since $|\mathcal{L}(\tilde{\tilde{\tilde{\bm f}}})| = 1$ and $|\mathcal{L}(\bm h)| = |\{ 0, 1, \dots , p-1 \}^{n_0}| = p^{n_0}$, using Lemma \ref{lemm}, the following holds:
\begin{align}
    |\mathcal{L}(\tilde{\tilde{\tilde{\bm f}}} \circ \bm h)| = |\mathcal{L}(\tilde{\tilde{\tilde{\bm f}}})| |\mathcal{L}(\bm h)| = 1 \cdot p^{n_0} = p^{n_0}.
\end{align}
Similarly, the following also holds:
\begin{align}
    |\mathcal{L}(\tilde{\tilde{\tilde{\bm f}}} \circ \bm h \circ \bm h)| = |\mathcal{L}(\tilde{\tilde{\tilde{\bm f}}} \circ \bm h)| |\mathcal{L}(\bm h)| = p^{n_0} \cdot p^{n_0} = p^{2 n_0}.
\end{align}
By repeating this recursively, we obtain
\begin{align}
    |\mathcal{L}(\bm F_{\bm \theta})| = |\mathcal{L}(\tilde{\tilde{\tilde{\bm f}}} \circ \bm h \circ \bm h \circ \cdots \circ \bm h)| = p^{n_0 (L-1)}
\end{align}
Therefore, Proposition \ref{prop:previous} is proven.
\end{proof}

\end{document}